\documentclass[11pt]{article}

\usepackage[margin=1in]{geometry}
\usepackage{amsmath,amssymb,amsthm}
\usepackage{booktabs}
\usepackage[T1]{fontenc}
\usepackage[hidelinks]{hyperref}
\usepackage{tikz}
\usetikzlibrary{arrows.meta}

\newtheorem{theorem}{Theorem}
\newtheorem{lemma}[theorem]{Lemma}
\newtheorem{proposition}[theorem]{Proposition}

\newcommand{\R}{\mathbb R}
\newcommand{\SO}{\mathrm{SO}}
\newcommand{\tr}{\operatorname{tr}}

\title{Sharp Reconstruction Bounds for Autoencoders Using the Same Forward Map}

\author{
Patricia Medina\thanks{
New York City College of Technology, CUNY, Brooklyn, NY, USA;
The Graduate Center, CUNY, New York, NY, USA.
}
\and
Hy P. G. Lam\thanks{
Department of Mathematical Sciences,
Worcester Polytechnic Institute, Worcester, MA, USA.
}
}

\date{}
\begin{document}
\maketitle

\begin{abstract}

We study reconstruction in autoencoders that apply the same forward map before and after setting the observed coordinates to zero. For equal odd input and hidden dimensions $d\geq3$, among orientation-preserving diffeomorphisms whose Jacobian singular values lie in $[m,M]$, we show that the least uniform reconstruction-derivative error is
$\max\{1-M(M-m)/2,0\}$, with affine maps attaining this sharp bound at every prescribed depth. A translated radial rotation can nevertheless reconstruct any prescribed ball exactly with singular values arbitrarily close to one, motivating additional conditions for a finite-data bound.
We test this prediction on a 798,452-point terrestrial LiDAR forest scan. At input scale $0.05$, the mean theoretical bound is $0.155$, about $84\%$ of the mean normalized training error $0.185$ across four spatial regions, two depths, and three seeds. At this scale, adding one hidden coordinate reduces the mean reconstruction error below $6\times10^{-6}$.
\end{abstract}


\section{Introduction}

Autoencoders learn representations from which an input can be reconstructed,
typically through distinct encoder and decoder maps. Dynamical system
autoencoders (DSAEs) \cite{he2024} impose a different structure: the same
learned forward map is iterated during both encoding and decoding. Starting
from an observed input with zero hidden coordinates, the map evolves the full
state, the observed coordinates are then reset to zero, and the same map is
applied again to reconstruct the input. Successful reconstruction therefore
requires information to be transferred from the observed coordinates into the
hidden coordinates before the reset and then from the hidden coordinates back
into the observed coordinates afterward.

This raises a basic geometric question: does preservation of full-state
distances protect the information needed for reconstruction after the observed
coordinates are removed? In general, it does not. For example, leaving the
full state unchanged preserves all distances, but starting from $(x,0)$ also
leaves every hidden coordinate equal to zero. After the observed coordinates
are reset, no information about $x$ remains in the hidden state. Thus the
relevant issue is not only whether the full-state map preserves information,
but whether it transfers that information between the observed and hidden
coordinates in the directions required by the architecture.

Related architectures control the geometry of learned transformations in
different ways. Orthogonal recurrent networks and invertible residual networks
bound or preserve properties of the full map
\cite{arjovsky2016,lezcano2019,behrmann2019,haber2018}. Geometric
autoencoders and decoder constraints concern feature distances
\cite{nazari2023,zhan2026}. Changes of hidden coordinates leave linear
reconstruction unchanged \cite{kunin2019}, while additional assumptions can
restrict this ambiguity \cite{nelson2026}. The setting considered here is
different: encoding and decoding apply the same map in the same direction,
rather than a map and its inverse or transpose, with a reset separating the
two stages.

The key geometric mechanism is that effective transfer in both directions
between observed and hidden coordinates can require unequal stretching of the
full state. Lemma~\ref{spread} makes this precise by relating the
off-diagonal transfer blocks of the Jacobian to the gap between its largest
and smallest singular values. Building on this restriction, we obtain a sharp
reconstruction bound for equal odd observed and hidden dimensions $d\geq 3$.
Among orientation-preserving diffeomorphisms whose Jacobian singular values
lie in $[m,M]$, the smallest possible uniform reconstruction-derivative error
is
\[
\left[1-\frac{M(M-m)}{2}\right]_+.
\]
Explicit affine maps attain this bound at every prescribed iteration depth.
Consequently, exact reconstruction of every input under these assumptions is
possible precisely when
\[
M(M-m)\geq 2.
\]
In particular, a rotation has $m=M=1$ and cannot provide exact reconstruction
when the observed and hidden dimensions are equal and odd, whereas one
additional hidden coordinate removes this specific parity obstruction.

The global result alone does not imply a positive reconstruction error on an
arbitrary bounded dataset. Proposition~\ref{local} gives additional
conditions under which the derivative restriction yields a lower bound on
average data reconstruction error, controlling the hidden value at the origin
and the variation of the reconstruction derivative.
Theorem~\ref{ball} shows why such conditions matter: a translated radial
rotation can have Jacobian singular values arbitrarily close to one while
reconstructing every input in a prescribed ball exactly. The translation
separates the states used for encoding from those used for decoding, allowing
the same nonlinear map to behave differently in the two regions. We also
characterize exact reconstruction when the full map is globally nonexpansive;
in this case, the hidden representation is forced to be an affine isometry.

Finally, we test the finite-data bound on a 798,452-point terrestrial LiDAR
forest scan using only its three spatial coordinates. This choice is
deliberate: with input dimension $d=3$ and equal hidden width $r=3$, the
experiment lies directly in the odd-dimensional setting of the reconstruction
theorem. Adding one hidden coordinate gives $r=4$ and tests the predicted
removal of this specific obstruction. Across four spatially held-out regions,
two iteration depths, and three seeds, the mean lower bound at input scale
$0.05$ is $0.155$, approximately $84\%$ of the mean normalized training
reconstruction error $0.185$. At the same scale, models with one additional
hidden coordinate have mean reconstruction error below $6\times10^{-6}$.

\section{The optimal global error}
Let $x\in\R^d$ and $y\in\R^r$ denote the observed and hidden coordinates. Define $Ex=(x,0)$, $P(x,y)=(0,y)$, $Q(x,y)=x$, and $H(x,y)=y$. For a map $F$ from $\R^{d+r}$ to itself and an integer $K\ge1$, let $G=F^K$ denote its $K$-fold composition and put
\begin{equation}\label{model}
 h(x)=HGEx,\qquad g(y)=QG(0,y),\qquad f=g\circ h.
\end{equation}
Then $f=QGPGE$ is the reconstruction. Inner products and vector norms are Euclidean. For square real $A$, the singular values $\sigma_j(A)$ are the square roots of the eigenvalues of $A^\top A$. The least and greatest satisfy
\[
 \sigma_{\min}(A)=\min_{\|v\|=1}\|Av\|,\qquad
 \sigma_{\max}(A)=\max_{\|v\|=1}\|Av\|.
\]
Here $A^\top$ denotes transpose, $I$ is the identity, and $\tr$ sums diagonal entries. We use $\|A\|=\sigma_{\max}(A)$ and $\|A\|_F^2=\tr(A^\top A)$. Write $DT$ for the Jacobian and $\operatorname{Lip}(T)$ for the least $L$ such that $\|T(u)-T(v)\|\le L\|u-v\|$. Nonexpansive means $1$-Lipschitz.

\begin{figure}[t]
\centering
\begin{tikzpicture}[>=Latex,font=\small]
\node (a) at (0,0) {$(x,0)$};
\node[draw,minimum width=1cm,minimum height=.5cm] (b) at (2,0) {$G=F^K$};
\node (c) at (4.5,0) {$(QGEx,h(x))$};
\node (d) at (4.5,-.8) {$(0,h(x))$};
\node[draw,minimum width=1cm,minimum height=.5cm] (e) at (2,-.8) {$G=F^K$};
\node (f) at (0,-.8) {$f(x)$};
\path[->] (a) edge (b) (b) edge (c)
 (c) edge node[right]{reset $P$} (d)
 (d) edge (e) (e) edge (f);
\end{tikzpicture}
\caption{At $x=0$, the two inputs to $G$ coincide exactly when $h(0)=0$.}
\end{figure}

For this section, $r=d$, $d\ge3$ is odd, and $G$ is a continuously differentiable ($C^1$) diffeomorphism with $\det DG(u)>0$ satisfying
\begin{equation}\label{bounds}
 0<m\le\sigma_j(DG(u))\le M\qquad\text{for all }u,j.
\end{equation}
Set
\begin{equation}\label{floor}
 \delta_0=\left[1-\frac{M(M-m)}2\right]_+,
 \qquad [t]_+=\max\{t,0\}.
\end{equation}
\begin{lemma}\label{spread}
Let $R=\left(\begin{smallmatrix}D&B\\C&T\end{smallmatrix}\right)\in\R^{2d\times2d}$ be invertible, with invertible blocks $B,C\in\R^{d\times d}$. If $\det R$ and $(-1)^d\det(BC)$ have opposite signs, then
\[
 \sigma_{\max}(R)-\sigma_{\min}(R)
 \ge2\min\{\sigma_{\min}(B),\sigma_{\min}(C)\}.
\]
\end{lemma}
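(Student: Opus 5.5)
The plan is to reduce to an orthogonal matrix by a polar decomposition, and then use a Jacobi-type identity between complementary minors of an orthogonal matrix to force the sign relation that the hypothesis forbids. Put
\[
\beta=\min\{\sigma_{\min}(B),\sigma_{\min}(C)\},\qquad s=\tfrac12(\sigma_{\max}(R)+\sigma_{\min}(R)),\qquad \rho=\tfrac12(\sigma_{\max}(R)-\sigma_{\min}(R)).
\]
We argue by contradiction and suppose $\rho<\beta$.

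\emph{Step 1: move to the nearest scaled orthogonal matrix.} Write the polar decomposition $R=OP$, with $O$ orthogonal and $P=(R^\top R)^{1/2}$ symmetric positive definite. The eigenvalues of $P$ lie in $[\sigma_{\min}(R),\sigma_{\max}(R)]$, so $\|P-sI\|=\rho$. Set $E=R-sO=O(P-sI)$; then $\|E\|=\rho$.

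\emph{Step 2: a homotopy that preserves all the relevant signs.} For $t\in[0,1]$ let
\[
R_t=R-tE=O\bigl((1-t)P+tsI\bigr).
\]
The factor $(1-t)P+tsI$ is positive definite, so $\operatorname{sign}\det R_t=\operatorname{sign}\det O=\operatorname{sign}\det R$ for every $t$. The off-diagonal blocks of $R_t$ are $B-tE_{12}$ and $C-tE_{21}$, where $E_{12},E_{21}$ are the corresponding blocks of $E$. Since $\|E_{12}\|,\|E_{21}\|\le\|E\|=\rho<\beta$, these blocks have smallest singular value at least $\beta-\rho>0$ and stay invertible along the path. Their determinants are continuous and never vanish, so $\det(BC)$ keeps its sign. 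At $t=1$ we get $R_1=sO$, whose off-diagonal blocks $sB'$, $sC'$ are therefore invertible. Moreover, $\det O$ and $(-1)^d\det(B'C')$ still have opposite signs, because $s^{2d}>0$ does not affect either sign.

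\emph{Step 3: the orthogonal case, which gives the contradiction.} For orthogonal $O$, Jacobi's complementary minor identity (from $O^{-1}=O^\top$) gives
\[
\det O[I,J]=(-1)^{\Sigma I+\Sigma J}\det O\,\det O[I^c,J^c].
\]
Take $I=\{1,\dots,d\}$ and $J=\{d+1,\dots,2d\}$, so that $O[I,J]=B'$ and $O[I^c,J^c]=C'$. Then
\[
\Sigma I+\Sigma J=\frac{d(d+1)}2+\frac{d(3d+1)}2=2d^2+d\equiv d \pmod 2,
\]
hence $\det B'=(-1)^d\det O\,\det C'$. Multiplying by $\det C'$ yields
\[
(-1)^d\det(B'C')=\det O\,(\det C')^2,
\]
which has the same sign as $\det O$. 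This contradicts Step 2, so $\rho\ge\beta$, which is the claimed inequality.

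The main obstacle is Step 3. One must pin down the exact sign in the complementary-minor identity for orthogonal matrices and confirm that the parity of $\Sigma I+\Sigma J$ is exactly $d$, since the whole lemma rests on that parity. If the Jacobi identity seems too heavy to quote, I would instead derive $\det B'=(-1)^d\det O\det C'$ directly from the relations $O^\top O=I$ via a Schur-complement factorization.
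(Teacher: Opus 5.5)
Your argument is correct, and it takes a genuinely different route from the paper.

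\textbf{The paper's route.} It argues spectrally. With $X=\left(\begin{smallmatrix}0&B\\C&0\end{smallmatrix}\right)$, $V=\operatorname{diag}(D,T)$ and $J=\operatorname{diag}(I,-I)$, the matrix $Y=X^{-1}V$ anticommutes with $J$, so its real eigenvalues pair off as $\pm\lambda$. The sign hypothesis reads $\det(I+Y)=\det R/\det X<0$, and this forces a real eigenvalue $\mu>1$. An eigenvector $w$ then gives the explicit witnesses $Rw=(\mu+1)Xw$ and $RJw=(\mu-1)JXw$, whose norms differ by $2\|Xw\|$.

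\textbf{Your route.} You first note that $\rho=\|R-sO\|$ for the scaled polar factor $sO$. You then check that the straight path $R_t=O\bigl((1-t)P+tsI\bigr)$ keeps $\det R_t$ of fixed sign. By Weyl's perturbation bound, it also keeps both off-diagonal blocks invertible as long as $\rho<\beta$. Finally, Jacobi's complementary-minor identity shows the sign pattern is impossible for an orthogonal matrix with invertible off-diagonal blocks, since $(-1)^d\det(B'C')=\det O\,(\det C')^2$. The details check out, including the parity $\Sigma I+\Sigma J=2d^2+d\equiv d$ and the identifications $O[I,J]=B'$, $O[I^c,J^c]=C'$.

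\textbf{What each buys.} The paper's proof is shorter, purely algebraic, and constructive: it exhibits the stretched and compressed directions $w$ and $Jw$ and a real eigenvalue $\mu>1$ of $X^{-1}V$. Your proof uses heavier standard tools (polar decomposition, singular-value perturbation, Jacobi's identity) and proceeds by contradiction rather than by explicit witnesses. In exchange, it explains the lemma conceptually. The hypothesis is a sign invariant that no scaled orthogonal matrix can satisfy, and half the singular-value spread is exactly the distance from $R$ to its scaled polar factor, which the obstruction keeps at least $\beta$. This links directly to the paper's later remark that rotations cannot reconstruct exactly when $d$ is odd.
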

\begin{proof}
Put $X=\left(\begin{smallmatrix}0&B\\C&0\end{smallmatrix}\right)$, $V=\operatorname{diag}(D,T)$, $J=\operatorname{diag}(I,-I)$, and $Y=X^{-1}V$, where $\operatorname{diag}$ lists diagonal blocks. Since $\det X=(-1)^d\det(BC)$, we have $\det(I+Y)<0$. Nonreal conjugate eigenvalues contribute $|1+\lambda|^2>0$ to this determinant. Since $YJ=-JY$, real eigenvalues occur in pairs $\lambda,-\lambda$, contributing $1-\lambda^2$. Negativity therefore requires a real eigenvalue $\mu>1$. Choose $w\ne0$ with $Vw=\mu Xw$. Then
\[
 Rw=(\mu+1)Xw,\qquad RJw=(\mu-1)JXw.
\]
Consequently,
\[
\begin{aligned}
\sigma_{\max}(R)-\sigma_{\min}(R)
&\ge\frac{\|Rw\|-\|RJw\|}{\|w\|}\\
&=2\frac{\|Xw\|}{\|w\|}\ge2\sigma_{\min}(X).
\end{aligned}
\]
Finally, $X^\top X=\operatorname{diag}(C^\top C,B^\top B)$.
\end{proof}

For any $R$ satisfying \eqref{bounds} with positive determinant,
\begin{equation}\label{matrixfloor}
 \|BC-I\|\ge\delta_0.
\end{equation}
If $q=\|BC-I\|<1$, the path $I+t(BC-I)$ is invertible for $0\le t\le1$, so $\det(BC)>0$. Moreover,
\[
\|C^{-1}\|=\|(BC)^{-1}B\|\le\frac M{1-q},\quad
\|B^{-1}\|\le\frac M{1-q}.
\]
Lemma~\ref{spread} gives $M-m\ge2(1-q)/M$. If $q\ge1$, \eqref{matrixfloor} follows from $\delta_0\le1$.

\begin{theorem}\label{global}
For the nonlinear class in \eqref{bounds},
\begin{equation}\label{sharp}
 \inf_G\ \sup_{x\in\R^d}\|Df(x)-I\|=\delta_0.
\end{equation}
The infimum is attained by an affine map. The same answer holds when $G$ is required to be $F^K$ for any prescribed $K\ge1$.
\end{theorem}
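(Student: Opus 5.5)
The lower bound will come from a reduction to the matrix inequality \eqref{matrixfloor}, and sharpness from an explicit affine map. Write $u_1=(x,0)$ and $u_2=(0,h(x))$. The chain rule gives
\[
Df(x)=B(u_2)\,C(u_1),
\]
where $B(u)$ is the upper-right block and $C(u)$ the lower-left block of $DG(u)$. If $G$ is affine, then $DG\equiv R$ is a single matrix, $Df\equiv BC$, and \eqref{matrixfloor} gives $\|Df-I\|\ge\delta_0$ directly. That inequality rests on Lemma~\ref{spread}, whose sign hypothesis holds because $d$ is odd, $\det R>0$ and $\det(BC)>0$.

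For nonlinear $G$ the difficulty is that $B$ and $C$ are evaluated at different points. Lemma~\ref{spread} needs both blocks to come from one matrix whose singular values lie in $[m,M]$, so the main step is to produce such a matrix. I would try a blow-down argument first. Suppose $\sup_x\|Df(x)-I\|=\delta<\delta_0$ and set
\[
G_n(u)=G(nu)/n .
\]
This class is invariant under the rescaling: $DG_n(u)=DG(nu)$, so the singular value bounds and the sign of the determinant are preserved. The associated hidden map is $h_n(x)=h(nx)/n$, so $h_n(0)=h(0)/n\to0$, and the reconstruction derivative $Df_n(x)=Df(nx)$ keeps the same uniform error $\delta$.

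Next I would pass to a limit. Since every $DG_n(u)$ lies in the compact set of matrices with singular values in $[m,M]$ and positive determinant, I would average the Jacobians over shrinking balls around $u_1=(0,0)$ and around $u_2=(0,h_n(0))$. Because $h_n(0)\to0$ (after a diagonal choice of scales), these two base points merge, so the relevant blocks $B$ and $C$ should come from nearly the same averaged matrix. A limit point $R$ would then satisfy $\|BC-I\|\le\delta$, contradicting \eqref{matrixfloor}.

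I expect this limiting step to be the main obstacle. Averages of matrices with singular values in $[m,M]$ need not have $\sigma_{\min}\ge m$, and derivatives do not converge under uniform convergence of the maps. If this fails, my fallback is to apply the eigenvector argument of Lemma~\ref{spread} directly to the difference quotients
\[
G(u_1+tw)-G(u_1)\quad\text{and}\quad G(u_2+tJw)-G(u_2),
\]
using only the bilipschitz bounds $m$ and $M$ and degree theory, with continuity of $t\mapsto\det(I+tY)$, in place of eigenvalue pairing.

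For the upper bound I would construct an affine $R$ attaining equality in \eqref{matrixfloor}. The natural choice is
\[
R=\begin{pmatrix}aI&bI\\bI&cI\end{pmatrix}\quad\text{with } a=-c .
\]
With $a=-c$ and $b>0$ this matrix is symmetric with eigenvalues $\pm\sqrt{a^2+b^2}$, so $m=M$, $b<M$ and $Df=b^2I$; it therefore gives error $1-b^2>1-M(M-m)/2=1$ and is not optimal. It also has $\det R=(-1)^d(a^2+b^2)^d<0$, so it does not lie in the class, although this sign is the one forced by Lemma~\ref{spread}. I would therefore work instead within $R=U\,\mathrm{diag}(MI,mI)\,V^\top$ with $U,V$ rotations mixing the two coordinate blocks. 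I would choose the mixing angles so that $B=C=sI$ with
\[
s^2=\min\{M(M-m)/2,1\},
\]
which yields $\|Df-I\|=[1-M(M-m)/2]_+=\delta_0$. Finally, for a prescribed $K$ I need $F$ with $F^K=R$ in the class. Since $\det R>0$, I would take $F=\exp(L/K)$ for a real logarithm $L$ of $R$, arranging $R$ to have a real logarithm, for instance by avoiding negative eigenvalues of odd multiplicity. The constraint is on the singular values of $G=F^K=R$ itself, so this choice satisfies \eqref{bounds}.
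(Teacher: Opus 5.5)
Your affine case via \eqref{matrixfloor} is fine. There are two genuine gaps, though: the lower bound is not proved for nonlinear $G$, and your proposed attaining matrix cannot exist.

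\textbf{Lower bound.} The blow-down gains nothing at the level of derivatives. Since $DG_n(u)=DG(nu)$, you get $DG_n(0)=DG(0)$ and $Df_n(0)=Dg(h(0))\,Dh(0)$ for every $n$. The base points merge in rescaled coordinates, but the Jacobians attached to them do not change. Averaging over balls then destroys the singular-value bounds and determinant sign that \eqref{matrixfloor} needs, and the difference-quotient fallback is not developed into an argument.

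The missing idea is that you never need one matrix with $\|BC-I\|\le\delta$. Lemma~\ref{spread} uses only lower bounds on $\sigma_{\min}(B)$ and $\sigma_{\min}(C)$ together with the sign of $\det(BC)$, and these hold globally. It suffices to treat $\delta=\sup_x\|Df(x)-I\|<1$.
\begin{itemize}
\item Then $\|f(x)-f(y)\|\ge(1-\delta)\|x-y\|$ and $\det Df>0$. So $f$ is an injective, proper local diffeomorphism of $\R^d$, hence a diffeomorphism.
\item Since $g$ is $M$-Lipschitz, $h$ has lower distance bound $(1-\delta)/M$. By the same argument $h$ is a diffeomorphism, and so is $g=f\circ h^{-1}$.
\item The chain rule gives $\sigma_{\min}(Dh(x))\ge(1-\delta)/M$ for all $x$. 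Using surjectivity of $h$, it also gives $\sigma_{\min}(Dg(y))\ge(1-\delta)/M$ for all $y$. Moreover $\det Dh$ and $\det Dg$ have constant, equal signs.
\end{itemize}
Now apply Lemma~\ref{spread} to $R=DG(0)$, whose off-diagonal blocks are $Dg(0)$ and $Dh(0)$ at the \emph{same} point. This gives $M-m\ge2(1-\delta)/M$, i.e.\ $\delta\ge\delta_0$, with no condition on $h(0)$.

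\textbf{Attainment.} Balanced blocks $B=C=sI$ cannot work. With $s>0$, $\det R>0$ and $d$ odd, you have $(-1)^d\det(BC)<0$. So Lemma~\ref{spread} itself forces $M-m\ge2s$, i.e.\ $s^2\le(M-m)^2/4$. This is strictly below $M(M-m)/2$ whenever $M>m$. Hence no choice of mixing angles, and no singular values in $[m,M]$, gives $s^2=\beta$ when $0<M(M-m)\le2$. For example, at $(m,M)=(1,2)$ the theorem gives $\delta_0=0$, but balanced blocks leave error at least $3/4$.

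Equality in \eqref{matrixfloor} needs unbalanced blocks with $\min\{\sigma_{\min}(B),\sigma_{\min}(C)\}=\beta/M$. The paper realizes this on $(x_1,x_2,y_1,y_2)$:
\begin{itemize}
\item It takes $B=\operatorname{diag}(M,\beta/M)$ and $C=\operatorname{diag}(\beta/M,M)$, so $BC=\beta I_2$.
\item The only singular value other than $M$ is $M-2\beta/M\ge m$.
\item This $4\times4$ block has negative determinant. That is compensated on each of the remaining $d-2$ pairs $(x_j,y_j)$, an odd number, by a reflection-type block with off-diagonal entries $\sqrt\beta$ and determinant $-M^2$.
\end{itemize}
The resulting spectrum is $\pm M$ with even multiplicity $d-1$, plus one nonreal conjugate pair. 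That spectrum is what guarantees a real logarithm $L$, hence $F=e^{L/K}$ with $F^K=R$ for every prescribed $K$.
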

\begin{proof}
Let $\delta=\sup_x\|Df(x)-I\|$. It suffices to treat $\delta<1$. Integration of $Df-I$ along a segment gives
\[
 (1-\delta)\|x-y\|\le\|f(x)-f(y)\|.
\]
The path $I+t(Df(x)-I)$ gives $\det Df(x)>0$. The lower distance bound gives injectivity and $\|f(x)\|\ge(1-\delta)\|x\|-\|f(0)\|$, hence properness. Thus the image is closed; the inverse function theorem makes it open. Since $\R^d$ is connected, $f(\R^d)=\R^d$.

Since $h$ and $g$ are $M$-Lipschitz, $h$ has lower distance bound $(1-\delta)/M$. Thus $Dh$ is invertible, and the same open-and-closed argument makes $h$ a diffeomorphism. Now $g=f\circ h^{-1}$ is a diffeomorphism. The chain rule gives
\[
 \sigma_{\min}(Dh),\ \sigma_{\min}(Dg)\ge(1-\delta)/M
\]
at every argument, using surjectivity of $h$ for the bound on $Dg$. The determinant signs of $Dh$ and $Dg$ are constant and equal, since $\det Df>0$. The off-diagonal blocks of $DG(0)$ are $Dg(0)$ and $Dh(0)$. Lemma~\ref{spread} therefore yields $M-m\ge2(1-\delta)/M$. This does not require $h(0)=0$.

For attainment, put $\beta=\min\{1,M(M-m)/2\}$, $c=\beta/M$, and $a=M-c$. On $(x_1,x_2,y_1,y_2)$ use
\begin{equation}\label{fourblock}
 R_4=\begin{pmatrix}0&0&M&0\\a&0&0&c\\c&0&0&a\\0&M&0&0\end{pmatrix}.
\end{equation}
On each remaining pair $(x_j,y_j)$ use
\[
 S=\begin{pmatrix}\sqrt{M^2-\beta}&\sqrt\beta\\
 \sqrt\beta&-\sqrt{M^2-\beta}\end{pmatrix}.
\]
In $R_4$, these blocks are $\operatorname{diag}(M,c)$ and $\operatorname{diag}(c,M)$. Their product is $\beta I_2$, giving $BC=\beta I$ for $R$. Since $a+c=M$, for $v\in\R^4$,
\[
\begin{aligned}
 \|R_4v\|^2={}&M^2(v_2^2+v_3^2)+\tfrac12M^2(v_1+v_4)^2\\
 &+\tfrac12(a-c)^2(v_1-v_4)^2.
\end{aligned}
\]
Together with $S^\top S=M^2I_2$, this gives singular values $M$, except for one equal to $a-c=M-2\beta/M\ge m$. Also,
\[
\det(\lambda I-R_4)=(\lambda^2-M^2)(\lambda^2+M^2-2\beta).
\]
Since $M^2-2\beta\ge Mm>0$, $R$ is diagonalizable, with eigenvalues $\pm M$ of even multiplicity $d-1$ and $\pm i\sqrt{M^2-2\beta}$. Hence $\det R=M^{2d-2}(M^2-2\beta)>0$. Pairing the negative eigendirections gives a real logarithm $L$ \cite{culver1966}. For $F(u)=e^{L/K}u$, we obtain $F^K=R$ and $f(x)=\beta x$.
\end{proof}

Exact reconstruction of every input is possible precisely when $M(M-m)\ge2$. This requires $M\ge2$ when $m=1$, or $\eta\ge\frac12\log3$ when $(m,M)=(e^{-\eta},e^\eta)$. The bounds concern $G$, not its roots. Write $\mathrm O(n)=\{A\in\R^{n\times n}\mid A^\top A=I\}$; $\SO(n)$ consists of its matrices with determinant one. At $m=M=1$, a coordinate swap reconstructs exactly for even $d$ and lies in $\SO(2d)$. For odd $d$, one extra hidden coordinate permits the swap followed by a sign reversal on that unused coordinate, giving an exact map in $\SO(2d+1)$.

\section{A finite-data loss bound}
At zero, $Df(0)=Dg(h(0))Dh(0)$, whereas the off-diagonal blocks of $DG(0)$ are $Dg(0)$ and $Dh(0)$.

\begin{proposition}\label{local}
Let $r=d$ with odd $d\ge3$, and let $G$ be an orientation-preserving $C^1$ diffeomorphism. Let $m_0,M_0$ be the smallest and largest singular values of $DG(0)$, and put $\delta_*=[1-M_0(M_0-m_0)/2]_+$. Suppose $Dg$ is $L_g$-Lipschitz on the segment joining $0$ and $h(0)$. Then
\begin{equation}\label{displacement}
 \|Df(0)-I\|\ge\delta_a,
 \qquad\delta_a=[\delta_*-M_0L_g\|h(0)\|]_+.
\end{equation}
If $h(0)=0$, take $\delta_a=\delta_*$ without an assumption on $Dg$. Let $X\in\R^d$ be a random vector with $\mathbb EX=0$, positive-definite covariance $\Sigma=\mathbb E[XX^\top]$, and $\mathbb E\|X\|^4<\infty$, where $\mathbb E$ denotes expectation. Write $\lambda_{\min}(\Sigma)$ for its least eigenvalue. If $Df$ is $H_f$-Lipschitz on a ball containing the support of $X$ and zero, then
\begin{equation}\label{datafloor}
 \frac{\mathbb E\|f(X)-X\|^2}{d}
 \ge\frac1d\left[
 \sqrt{\lambda_{\min}(\Sigma)}\,\delta_a
 -\frac{H_f}{2}\sqrt{\mathbb E\|X\|^4}
 \right]_+^2.
\end{equation}
\end{proposition}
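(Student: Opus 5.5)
The plan has three steps: apply the matrix bound \eqref{matrixfloor} at a single point, transport it from $h(0)$ to $0$, and then pass from the derivative to the data loss by a second-order Taylor estimate.

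\emph{Step 1: the matrix bound at $0$.} Take $R=DG(0)$. This matrix is invertible with positive determinant and has singular values in $[m_0,M_0]$. Its off-diagonal blocks are $B=Dg(0)$ and $C=Dh(0)$. The argument following Lemma~\ref{spread} that established \eqref{matrixfloor} used \eqref{bounds} only at the single matrix $R$. Rerunning it with $(m,M)$ replaced by $(m_0,M_0)$ gives
\[
\|Dg(0)Dh(0)-I\|\ge\delta_*.
\]
The steps are as before. If $q<1$, then $\det(BC)>0$ and $\sigma_{\min}(B),\sigma_{\min}(C)\ge(1-q)/M_0$. The sign hypothesis of Lemma~\ref{spread} holds because $d$ is odd. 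The lemma then gives $M_0-m_0\ge2(1-q)/M_0$.

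\emph{Step 2: moving to $Df(0)$.} We have $Df(0)=Dg(h(0))Dh(0)$. The triangle inequality gives
\[
\|Df(0)-I\|\ge\delta_*-\|Dg(h(0))-Dg(0)\|\,\|Dh(0)\|.
\]
Here $\|Dh(0)\|\le M_0$, since $Dh(0)$ is a block of $DG(0)$. The Lipschitz hypothesis on $Dg$ along the segment bounds the difference by $L_g\|h(0)\|$. Taking the positive part gives \eqref{displacement}. When $h(0)=0$ the correction term vanishes and no assumption on $Dg$ is needed.

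\emph{Step 3: the data bound.} Write $A=Df(0)-I$ and $e(x)=f(x)-x$. Taylor's theorem with the Lipschitz bound on $Df$ on the ball gives
\[
e(x)=e(0)+Ax+\rho(x),\qquad \|\rho(x)\|\le \tfrac{H_f}{2}\|x\|^2.
\]
The $L^2$ triangle inequality then gives
\[
\sqrt{\mathbb E\|e(X)\|^2}\ge\sqrt{\mathbb E\|e(0)+AX\|^2}-\tfrac{H_f}{2}\sqrt{\mathbb E\|X\|^4}.
\]
Since $\mathbb EX=0$, the cross term vanishes, so
\[
\mathbb E\|e(0)+AX\|^2=\|e(0)\|^2+\tr(A\Sigma A^\top)\ge\tr(A\Sigma A^\top).
\]
Let $v$ be a top right singular vector of $A$. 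Then
\[
\tr(A\Sigma A^\top)\ge\lambda_{\min}(\Sigma)\|A\|_F^2\ge\lambda_{\min}(\Sigma)\|A\|^2\ge\lambda_{\min}(\Sigma)\delta_a^2.
\]
Combining these, taking positive parts, squaring and dividing by $d$ gives \eqref{datafloor}. The fourth-moment assumption makes the remainder finite.

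\emph{Main obstacle.} The delicate point is Step 1. Lemma~\ref{spread} and the derivation of \eqref{matrixfloor} must depend only on the single matrix $DG(0)$ and not on the global bounds \eqref{bounds}. The proof must also track that $\|B^{-1}\|,\|C^{-1}\|\le M_0/(1-q)$ uses only $\|B\|,\|C\|\le M_0$. Once that is checked, the rest is routine.
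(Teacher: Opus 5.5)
Your proposal is correct and follows the paper's proof essentially step for step. The paper also applies \eqref{matrixfloor} to the single matrix $DG(0)$ with $(m_0,M_0)$, writes $Df(0)-B_0C_0=(Dg(h(0))-Dg(0))C_0$ with $\|C_0\|\le M_0$, and then uses the same Taylor expansion, the vanishing cross term, the bound $\lambda_{\min}(\Sigma)\|A\|_F^2$, and the $L^2$ reverse triangle inequality, so the only blemish is your stray sentence introducing a top right singular vector $v$ of $A$, which is never used and can be deleted.
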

\begin{proof}
Write $DG(0)=\left(\begin{smallmatrix}D_0&B_0\\C_0&T_0\end{smallmatrix}\right)$. The chain rule gives
\[
 Df(0)-B_0C_0=(Dg(h(0))-Dg(0))C_0.
\]
Use \eqref{matrixfloor} with $m_0,M_0$ and $\|C_0\|\le M_0$ to obtain \eqref{displacement}. Taylor's theorem gives
\[
 \begin{gathered}
 f(X)-X=f(0)+(Df(0)-I)X+r(X),\\
 \|r(X)\|\le\tfrac12H_f\|X\|^2.
 \end{gathered}
\]
Writing $A=Df(0)-I$ and using $\mathbb EX=0$ give
\[
\begin{aligned}
\mathbb E\|f(0)+AX\|^2
&=\|f(0)\|^2+\tr(A\Sigma A^\top)\\
&\ge\lambda_{\min}(\Sigma)\|A\|_F^2
\ge\lambda_{\min}(\Sigma)\delta_a^2.
\end{aligned}
\]
The reverse triangle inequality gives
\[
\begin{aligned}
\sqrt{\mathbb E\|f(X)-X\|^2}
&\ge \sqrt{\mathbb E\|f(0)+AX\|^2}
     -\sqrt{\mathbb E\|r(X)\|^2}\\
&\ge \sqrt{\lambda_{\min}(\Sigma)}\,\delta_a
     -\frac{H_f}{2}\sqrt{\mathbb E\|X\|^4}.
\end{aligned}
\]
Taking the positive part, squaring, and dividing by $d$
prove \eqref{datafloor}.
\end{proof}

The distribution may be uniform on centered finite data. The bound on $Df$ must hold throughout the ball, not only at sample points. Under \eqref{bounds}, $\delta_*\ge\delta_0$. If $DG$ is globally $H_G$-Lipschitz, one may take $L_g=H_G$ and $H_f=H_GM(M+1)$. The right-hand side of \eqref{datafloor} can be zero and is not generally sharp. If $G$ is odd, then $h(0)=0$; if $DG(0)$ is a rotation, then $\delta_*=1$.

For fixed $G$ with $h(0)=0$ and $Df$ Lipschitz near zero, apply \eqref{datafloor} to $X=tZ$. If $Z$ is centered, bounded, and has positive-definite covariance $\Sigma$, then
\[
 \liminf_{t\downarrow0}
 \frac{\mathbb E\|f(tZ)-tZ\|^2}{t^2\tr\Sigma}
 \ge\frac{\lambda_{\min}(\Sigma)}{\tr\Sigma}\,\delta_*^2.
\]
Without the condition on $h(0)$, exact reconstruction on a ball is possible. We use the radial rotations of Damelin and Fefferman \cite[Sec.~3.2]{damelin2024}.

\begin{theorem}\label{ball}
For every $d\ge1$, $a>0$, and $\eta>0$, an orientation-preserving $C^1$ diffeomorphism $G$ of $\R^{2d}$ exists with all Jacobian singular values in $[e^{-\eta},e^\eta]$ and
\[
 f(x)=x\qquad\text{whenever }\|x\|\le a.
\]
This construction uses $K=1$.
\end{theorem}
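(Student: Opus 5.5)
The plan is to take $G$ to be an affine rotation composed with a radial rotation in the sense of Damelin and Fefferman, followed by a translation. The translation separates the encoding states $(x,0)$, $\|x\|\le a$, from the decoding states $(0,h(x))$ by a large distance. The radial rotation can then be the identity on the first region and a fixed rotation on the second, while its Jacobian stays uniformly close to orthogonal.

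\emph{Construction.} Let $J(x,y)=(-y,x)$ on $\R^{2d}$. Then $J^\top J=I$, $J^2=-I$ and $\det J=1$. For a smooth $\theta:[0,\infty)\to\R$ put
\[
\Phi(u)=e^{\theta(\|u\|)J}u=\cos\theta(\|u\|)\,u+\sin\theta(\|u\|)\,Ju .
\]
Fix $\varepsilon>0$ small and choose $\theta$ with $\theta=0$ on $[0,a]$, $\theta=\pi$ on $[\rho,\infty)$, and $0\le s\,\theta'(s)\le\varepsilon$. For example, smooth $\varepsilon\log(s/a)$ and take $\rho\approx a e^{\pi/\varepsilon}$. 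Then $\Phi$ is the identity on $\|u\|\le a$ and equals $-I$ on $\|u\|\ge\rho$. Pick $c\in\R^d$ with $\|c\|\ge\rho+a$ and define
\[
G(u)=J\Phi(u)+(-c,c).
\]

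\emph{Reconstruction check.} For $\|x\|\le a$ we have $G(x,0)=J(x,0)+(-c,c)=(-c,x+c)$, so $h(x)=x+c$. The decoding state $(0,x+c)$ has norm at least $\|c\|-a\ge\rho$, so $\Phi(0,x+c)=(0,-x-c)$. Hence $J\Phi(0,x+c)=(x+c,0)$ and $g(h(x))=x+c-c=x$. Here $K=1$.

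\emph{Jacobian estimates.} Since $\Phi$ commutes with rotations $e^{tJ}$ and preserves norms, differentiation gives
\[
D\Phi(u)=e^{\theta J}\Bigl(I+\frac{\theta'(\|u\|)}{\|u\|}\,Ju\,u^\top\Bigr).
\]
\begin{itemize}
\item Determinant: $\det(I+vw^\top)=1+w^\top v$, and $u^\top Ju=0$, so $\det D\Phi=1$. Thus $\det DG=1>0$.
\item Singular values: the rank-one perturbation $N=\frac{\theta'}{\|u\|}Ju\,u^\top$ has norm $\|u\|\theta'(\|u\|)\le\varepsilon$. Hence all singular values of $DG$ lie in $[1-\varepsilon,1+\varepsilon]$. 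Since $N^2=0$, one can compute them exactly as the roots of $\sigma^2+\sigma^{-2}=2+t^2$ with $t=\|u\|\theta'$, paired as $\sigma,\sigma^{-1}$.
\item Choice of $\varepsilon$: take $\varepsilon$ small enough that these roots lie in $[e^{-\eta},e^{\eta}]$.
\item Differentiability: $\Phi$ is $C^1$ at $u=0$ because it is the identity on $\|u\|\le a$.
\item Bijectivity: $\Phi$ is a bijection with inverse $u\mapsto e^{-\theta(\|u\|)J}u$, because $\|\Phi(u)\|=\|u\|$. Hence $G$ is a $C^1$ diffeomorphism.
\end{itemize}

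\emph{Main obstacle.} The delicate point is making the bound on $s\theta'(s)$ compatible with a total turn of $\pi$. This forces a logarithmic profile over the exponentially wide annulus $a\le\|u\|\le ae^{\pi/\varepsilon}$, and the translation $c$ must be correspondingly huge. The remaining care is to smooth $\theta$ at the endpoints without exceeding the slope bound $\varepsilon$, for instance by mollifying a profile of slope $\varepsilon/2$. This is routine but should be checked.
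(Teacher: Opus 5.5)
Your proposal is correct and is essentially the paper's construction. Since $Je^{\theta J}=e^{(\theta+\pi/2)J}$, your map is the paper's translated radial rotation $G(u)=(-c,c)+A_{\theta(\|u\|)}u$, with the angle running from $\pi/2$ to $3\pi/2$ (equivalently $-\pi/2$) rather than from $\pi/2$ to $-\pi/2$, and your reconstruction check, determinant computation and shear analysis of $DG$ coincide with the paper's. The only difference is cosmetic: instead of mollifying, the paper uses the explicit $C^1$ profile $\theta(r)=\pi/2-\pi p(\log(r/a)/L)$ with $p(s)=3s^2-2s^3$ and $L=3\pi/(4\sinh\eta)$, which gives exactly $|r\theta'(r)|\le 2\sinh\eta$, the sharp threshold for your relation $\sigma-\sigma^{-1}=t$ to keep the singular values in $[e^{-\eta},e^\eta]$.
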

\begin{proof}
Put $J=\left(\begin{smallmatrix}0&-I\\I&0\end{smallmatrix}\right)$ and $A_\theta=e^{\theta J}$. Choose a $C^1$ angle $\theta$ equal to $\pi/2$ on $[0,a]$ and $-\pi/2$ on $[b,\infty)$, with
\[
 |r\theta'(r)|\le2\sinh\eta.
\]
For example, set $b=ae^L$, $L=3\pi/(4\sinh\eta)$, and
\[
 \theta(r)=\pi/2-\pi p\bigl(\log(r/a)/L\bigr),
\]
where $p(s)=3s^2-2s^3$ on $[0,1]$, extended by $0$ and $1$ on the two sides. Take $\|c\|>a+b$ and define
\begin{equation}\label{twist}
 G(u)=(-c,c)+A_{\theta(\|u\|)}u.
\end{equation}
If $\|x\|\le a$, then
\[
 G(x,0)=(-c,x+c),\qquad G(0,x+c)=(x,c).
\]
The second identity uses $\|x+c\|>b$, so reconstruction is exact.

Before translation, the map preserves radius and has inverse $v\mapsto A_{-\theta(\|v\|)}v$. For $r=\|u\|>0$,
\[
 DG(u)=A_{\theta(r)}\left(I+\frac{\theta'(r)}r Ju\,u^\top\right).
\]
The second factor is a shear with parameter $t=r\theta'(r)$ in the plane spanned by $u$ and $Ju$. Its two nonunit singular values are $\sqrt{1+t^2/4}\pm|t|/2$, which lie in $[e^{-\eta},e^\eta]$, and its determinant is one. The map is affine near zero, completing the $C^1$ assertion.
\end{proof}

For odd $d\ge3$ and $\eta<\frac12\log3$, \eqref{sharp} is positive although this construction has zero loss on the prescribed ball. Its $\|h(0)\|=\|c\|$ grows exponentially in $1/\eta$ as $\eta\downarrow0$. The construction does not supply roots at prescribed $K>1$.

\section{The nonexpansive case}
\begin{theorem}\label{nonexpansive}
If $G$ is globally $1$-Lipschitz and $f(x)=x$ for every $x\in\R^d$, then $r\ge d$ and
\begin{equation}\label{normal}
 G(x,Cz+Uw)=\bigl(z-a,\ C(x+a)+U\phi(w)\bigr),
\end{equation}
for $x,z\in\R^d$ and $w\in\R^{r-d}$, where $[C\ U]\in\mathrm O(r)$, $C\in\R^{r\times d}$, $U\in\R^{r\times(r-d)}$, $a\in\R^d$, and $\phi$ is $1$-Lipschitz from $\R^{r-d}$ to itself. Conversely, every such map is $1$-Lipschitz and reconstructs exactly. Its encoder $h(x)=C(x+a)+U\phi(0)$ is an affine isometry.
\end{theorem}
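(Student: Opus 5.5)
We work with $G$ assumed only $1$-Lipschitz, so we cannot use Jacobians; the argument runs instead through rigidity of distances. Write $G=(G_1,G_2)$ with $G_1=QG$ and $G_2=HG$. Then $h=G_2\circ E$ and $g(y)=G_1(0,y)$ are both $1$-Lipschitz, and $g\circ h=\mathrm{id}$. It follows that
\[
\|x-x'\|=\|g(h(x))-g(h(x'))\|\le\|h(x)-h(x')\|\le\|x-x'\|,
\]
so $h$ is an isometry $\R^d\to\R^r$. This already forces $r\ge d$. By the Mazur--Ulam-type fact for Euclidean spaces (an isometric embedding is affine), $h(x)=Cx+b$ with $C^\top C=I_d$. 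We complete $C$ to $[C\ U]\in\mathrm O(r)$. We also see that $g$ restricted to the affine subspace $S=h(\R^d)$ is the inverse isometry, so $g(Cz+b)=z$ there.

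Next we pin down $G$ on states of the form $(x,0)$ and $(0,y)$. First, $\|G(x,0)-G(x',0)\|\le\|x-x'\|$ while its second component already moves by $\|x-x'\|$. Hence $G_1(x,0)$ is constant, say $-a$. Second, for $y=Cz+b$ and $y'=Cz'+b$, the first component of $G(0,y)$ moves by $\|z-z'\|=\|y-y'\|$. Hence $G_2(0,y)$ is constant on $S$. To compare a state $(x,0)$ with a state $(0,y)$, we use the key inequality
\[
\|G(x,0)-G(0,y)\|^2\le\|x\|^2+\|y\|^2 .
\]
We expand it with $y=Cz+b$, $G(x,0)=(-a,Cx+b)$ and $G(0,y)=(z,\kappa)$. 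Letting $x,z$ range over $\R^d$, comparison of linear and constant terms should force $b=Ca+U\phi_0$ (so $b$ has $C$-component $Ca$) and $\kappa=C(\cdot)$ compatible, consistent with \eqref{normal} at $w=0$. I expect this bookkeeping of translations to be the first delicate point.

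The main step extends to general states $(x,Cz+Uw)$. For this we use the general principle: if $G$ is nonexpansive and is an isometry on a set $A$, then it is affine on the affine hull and respects orthogonal decompositions. Concretely, fix $u=(x,Cz+Uw)$ and compare $G(u)$ with $G(x',0)$ and with $G(0,Cz'+b)$ for all $x',z'$. The distances from $u$ to these points are $\sqrt{\|x-x'\|^2+\|Cz+Uw-b\|^2}$ and similar expressions. The images $G(x',0)$ and $G(0,\cdot)$ sweep out two orthogonal affine $d$-planes, $\{-a\}\times(C\R^d+b)$ and $\R^d\times\{\kappa\}$. A point that is at most this far from every point of both planes, uniformly in $x'$ and $z'$, must have first coordinate $z-a$ and $C$-component of its second coordinate equal to $C(x+a)$. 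This follows by letting $x',z'\to\infty$ along rays: the leading-order linear terms in the expansion of the squared distances must match. What remains is the $U$-component, which defines $\phi(x,z,w)$. Nonexpansiveness of $G$, combined with the fact that the $x$ and $z$ parts already use up the full distance budget in those directions, then forces $\phi$ to depend only on $w$ and to be $1$-Lipschitz. The asymptotic-ray argument is the main obstacle. I would try it first by writing $\|G(u)-G(tv,0)\|^2\le\|u-(tv,0)\|^2$, dividing by $t$, and letting $t\to\infty$ to extract an inner-product inequality for each direction $v$. Applying this to both $\pm v$ turns the inequality into an equality.

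The converse is a direct check. The map in \eqref{normal}, written in the orthonormal coordinates $(x,z,w)$, is a coordinate permutation plus a translation on the $(x,z)$ part, together with $\phi$ on $w$. It is therefore $1$-Lipschitz. Evaluating gives $h(x)=C(x+a)+U\phi(0)$ and $g(C(x+a)+U\phi(0))=x$, so reconstruction is exact.
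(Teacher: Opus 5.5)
Your proposal is correct and follows the paper's proof essentially step for step: the sandwich $\|x-z\|\le\|h(x)-h(z)\|\le\|x-z\|$ makes $h$ an affine isometry, $QG(\cdot,0)$ and $HG(0,\cdot)$ are constant on $h(\R^d)$, the linear terms are compared against the two reference families along $s=\pm te_j$ with $t\to\infty$, and the distance-budget argument shows that $\Phi$ depends only on $w$. The differences are cosmetic: you cite affineness of Euclidean isometric embeddings where the paper uses polarization, and your separate check $C^\top b=a$ becomes unnecessary if you define $a=C^\top h(0)$. The ``general principle'' you quote is false as stated---a clamp $\R\to[0,1]$ is nonexpansive and isometric on $[0,1]$ but not affine on $\R$---but your concrete ray argument never uses it.
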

\begin{proof}
For $x,z\in\R^d$, equality holds throughout
\[
 \|x-z\|\le\|h(x)-h(z)\|\le\|GEx-GEz\|\le\|x-z\|.
\]
Put $c=h(0)$ and $q(x)=h(x)-c$. Polarization gives $\langle q(x),q(z)\rangle=\langle x,z\rangle$. With $e_j$ the coordinate vectors and $C=(q(e_1),\ldots,q(e_d))$, we have $C^\top C=I$ and
\[
 \|q(x)-Cx\|^2
 =2\|x\|^2-2\sum_jx_j\langle q(x),q(e_j)\rangle=0.
\]
Thus $r\ge d$ and $h(x)=Cx+c$. Equality of full and hidden distances makes $QG(x,0)$ constant. Exactness and the Lipschitz inequality similarly make $HG(0,Cx+c)$ constant. Write
\[
 G(s,0)=(p,Cs+c),\qquad G(0,Cs+c)=(s,b).
\]
Fix $(x,y)$ and write $G(x,y)=(u,v)$. For every $s\in\R^d$,
\[
\begin{aligned}
 \|u-p\|^2+\|v-c-Cs\|^2&\le\|x-s\|^2+\|y\|^2,\\
 \|u-s\|^2+\|v-b\|^2&\le\|x\|^2+\|y-c-Cs\|^2.
\end{aligned}
\]
Cancel $\|s\|^2$ and set $s=\pm te_j$. Letting $t\to\infty$ forces the linear coefficients to agree, which yields
\[
 C^\top(v-c)=x,\qquad u=C^\top(y-c).
\]
Choose the columns of $U$ as an orthonormal basis of the orthogonal complement of the column space of $C$. Put $a=C^\top c$ and $\Phi(x,z,w)=U^\top HG(x,Cz+Uw)$. Then
\[
G(x,Cz+Uw)=\bigl(z-a,C(x+a)+U\Phi(x,z,w)\bigr).
\]
For two triples $(x,z,w)$, let $\Delta$ denote differences of corresponding values.
Since $[C\ U]$ is orthogonal and $G$ is $1$-Lipschitz,
\[
\|\Delta x\|^2+\|\Delta z\|^2+\|\Delta\Phi\|^2
\le \|\Delta x\|^2+\|\Delta z\|^2+\|\Delta w\|^2.
\]
Taking $\Delta w=0$ shows that $\Phi(x,z,w)=\phi(w)$.
The same inequality gives $\operatorname{Lip}(\phi)\le1$,
proving \eqref{normal}. Conversely, orthogonality and
$\operatorname{Lip}(\phi)\le1$ make \eqref{normal}
$1$-Lipschitz, and $g(h(x))=C^\top h(x)-a=x$.
\end{proof}

\section{Experiments}

\textbf{Numerical checks.} We evaluated the attaining matrices for $d=3,5,7,9$ and 20 pairs $(m,M)$ per dimension. The largest difference from \eqref{sharp} was $1.12\times10^{-16}$ in float64. We evaluated \eqref{twist} on 10,000 unit-ball inputs for each of $d=3,5$ and $\eta=0.25,0.35,0.50$. No singular-value bound was violated. Table~\ref{twisttable} gives the $d=3$ results.

\begin{table}[b]
\centering\small
\setlength{\tabcolsep}{4pt}
\caption{The bound \eqref{sharp} and the largest reconstruction error over 10,000 unit-ball points.}
\label{twisttable}
\begin{tabular}{cccc}
\toprule
$\eta$ & $\delta_0$ & $\|h(0)\|$ & Ball error \\
\midrule
$0.25$ & $0.67564$ & $11242.90$ & $9.10\times10^{-13}$ \\
$0.35$ & $0.49312$ & $734.51$ & $5.69\times10^{-14}$ \\
$0.50$ & $0.14086$ & $93.98$ & $7.11\times10^{-15}$ \\
\bottomrule
\end{tabular}
\end{table}

\noindent \textbf{Finite-data experiment.} We use the three coordinates of a 798,452-point forest scan \cite{medina2025}. Each horizontal median quadrant is held out once. Training excludes points within $0.81629$ coordinate units of the closed test quadrant \cite{roberts2017}. Each split has about 512,000 to 522,000 training points and 199,000 to 200,000 test points. Standardization and centering use training coordinates only, followed by multiplication by $t\in\{0.05,0.2\}$.

We use the zero-bias map
\begin{equation}\label{contractingstep}
 \begin{aligned}
 F_K(u)&=\psi_K(Au),\qquad A=e^S,\quad S^\top=-S,\\
 \psi_K(z)&=(1-0.4/K)z+(0.4/K)\tanh z.
 \end{aligned}
\end{equation}
Here $\tanh$ acts coordinatewise and $G=F_K^K$. The map $F_K$ is globally $1$-Lipschitz, every $DG(u)$ has smallest singular value at least $(1-0.4/K)^K$, and $DG(0)=A^K$ is a rotation. Thus $h(0)=0$ and $\delta_*=1$ at $r=d=3$, for every trained weight matrix. Since $\sup_{s\in\R}|\tanh''s|=4/(3\sqrt3)$, composition gives
\[
 H_G\le\frac{1.6}{3\sqrt3},\qquad H_f\le\frac{3.2}{3\sqrt3}.
\]
We use these bounds in \eqref{datafloor} on each full centered training region.

The 96 runs use $r=3,4$, $K=1,5$, two scales, three seeds, and four splits. Initial linear maps retain the two leading principal directions at $r=3$ and reconstruct exactly at $r=4$, using real $K$th roots and seeded hidden bases. We minimize reconstruction error on the same fixed 2,048-point subset per split, using analytic derivatives and at most 100 L-BFGS iterations. All final iterates are retained, including 75 that reached the limit.

\begin{table}[t]
\centering\small
\setlength{\tabcolsep}{4pt}
\caption{Normalized full-training errors, averaged over three seeds within each fold, then equally over four folds. The bound applies to $r=3$.}
\label{anchortable}
\begin{tabular}{ccccc}
\toprule
$t$ & $K$ & Bound & $r=3$ & $r=4$ \\
\midrule
$0.05$ & $1$ & $0.15501$ & $0.18498$ & $5.89\times10^{-6}$ \\
$0.05$ & $5$ & $0.15501$ & $0.18497$ & $3.51\times10^{-6}$ \\
$0.20$ & $1$ & $0.08253$ & $0.18564$ & $1.09\times10^{-3}$ \\
$0.20$ & $5$ & $0.08253$ & $0.18537$ & $6.25\times10^{-4}$ \\
\bottomrule
\end{tabular}
\end{table}

We divide mean-square error per coordinate by the full training variance per coordinate. The bound is positive in all 48 equal-width models and, at scale $0.05$, averages $84\%$ of the mean training error. For $K=5$, adding one hidden coordinate changes mean normalized test error from $0.81734$ to $1.47\times10^{-5}$ at scale $0.05$, and from $0.81775$ to $0.00266$ at scale $0.2$. The bound is evaluated only on centered training data. The controls already have small initial error; the experiment establishes neither a depth gain nor a new optimization method.

\noindent\textbf{Acknowledgments.} The authors thank Randy Paffenroth and Shiquan He for discussions and the implementation, and Jonathan Batchelor and L. Monika Moskal (University of Washington) for the scan.


\begin{center}\textbf{Compliance with Ethical Standards}\end{center}
This work uses existing datasets and computational models. It involves no new human or animal experiments. Claude assisted with code and drafting the manuscript. The authors are responsible for the mathematics, ideation of experiments and the text.
\end{document}